\newcommand{\cmark}{\ding{51}}%
\newcommand{\xmark}{\ding{55}}%
\newtheorem{Thm}{Theorem}
\newtheorem{corollary}{Corollary}
\newcommand{\graytext}[1]{{\scriptsize{\textcolor{gray}{#1}}}}
\definecolor{secondcolor}{RGB}{180,198,231}
\definecolor{firstcolor}{RGB}{240,220,220}
\newcommand{\bfsection}[1]{\vspace*{0.00cm}\noindent\textbf{#1.}}
\definecolor{cvprblue}{rgb}{0.21,0.49,0.74}
\title{Neurodynamics-Driven Coupled Neural P Systems for Multi-Focus Image Fusion}
\author{Bo Li$^{1,2}$\quad
            Yunkuo Lei$^{1,2}$\quad
            Tingting Bao$^{1,2}$\quad
            Hang Yan$^{1,2}$ \\
            Yaxian Wang$^{4}$\quad
            Weiping Fu$^{1,2}$\quad
            Lingling Zhang$^{1,2}$\thanks{Corresponding author.}\quad
            Jun Liu$^{1,3}$\\
		$^{1}$ School of Computer Science and Technology, Xi’an Jiaotong University \\
         $^{2}$ Ministry of Education Key Laboratory of Intelligent Networks and Network Security, China\\
        $^{3}$ Shaanxi Province Key Laboratory of Big Data Knowledge Engineering\quad
            $^{4}$ Chang’an University\\
		  {\tt\small morvanli@stu.xjtu.edu.cn, zhanglling@mail.xjtu.edu.cn}
          }
\begin{document}
\maketitle
\begin{abstract}
Multi-focus image fusion (MFIF) is a crucial technique in image processing, with a key challenge being the generation of decision maps with precise boundaries. However, traditional methods based on heuristic rules and deep learning methods with black-box networks are difficult to generate high-quality decision maps.
To overcome this challenge, we introduce neurodynamics-driven coupled neural P (CNP) systems, which are biological neural computation models inspired by spiking mechanisms, to enhance the accuracy of decision maps.
Specifically, we first conduct an in-depth analysis of the model's neurodynamics to identify the constraints between the network parameters and the input signals. This solid analysis avoids abnormal continuous firing of neurons and ensures the model accurately distinguishes between focused and unfocused regions, generating high-quality decision maps for MFIF.
Based on this analysis, we propose a \textbf{N}eurodynamics-\textbf{D}riven \textbf{CNP} \textbf{F}usion model (\textbf{ND-CNPFuse}) tailored for the challenging MFIF task.
Unlike current ideas of decision map generation, ND-CNPFuse distinguishes between focused and unfocused regions by mapping the source image into interpretable spike matrices. By comparing the number of spikes, an accurate decision map can be generated directly without any post-processing. 
Extensive experimental results show that ND-CNPFuse achieves new state-of-the-art performance on four classical MFIF datasets, including Lytro, MFFW, MFI-WHU, and Real-MFF.
The code is available at \url{https://github.com/MorvanLi/ND-CNPFuse}.
\end{abstract}    
\section{Introduction}
Multi-focus image fusion (MFIF)~\cite{9428544} is an important task in computer vision that aims to generate an all-in-focus image by fusing multiple images of the same scene captured at varying focal depths. The fused image improves both perceptual quality and information richness, benefiting downstream vision applications such as action  detection~\cite{wang2021discriminative} and object detection~\cite{li2024samf}. By integrating the focused regions from source images with different focus settings, MFIF effectively alleviates the blurring artifacts introduced by the limited depth of field inherent in imaging systems.

\begin{figure}[t]
	\centering
	\includegraphics[width=\columnwidth]{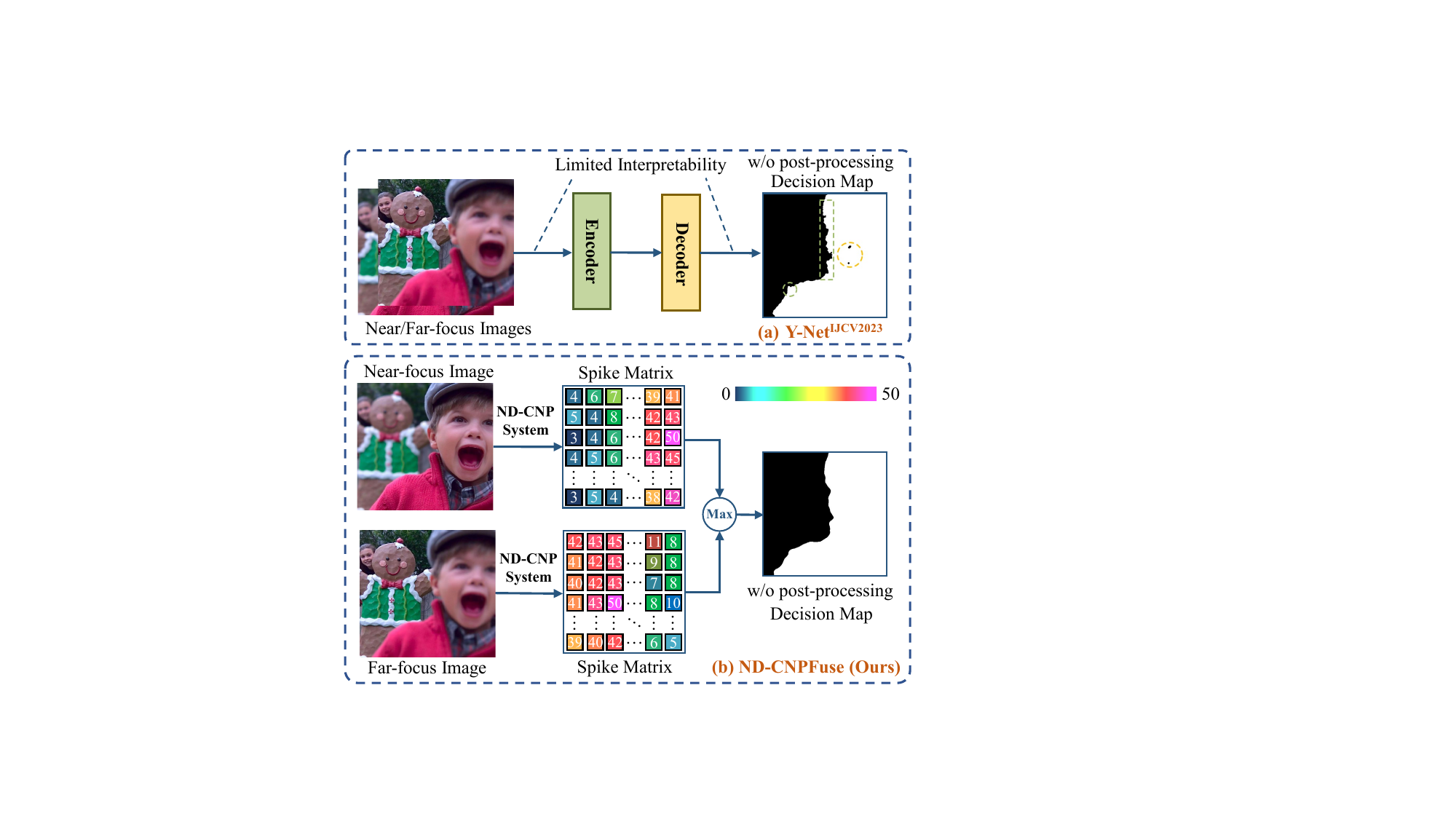}
	\caption{Workflow comparison of the existing decision method and the proposed ND-CNPFuse.
    ND-CNPFuse encodes source images into spike matrices and generates the decision map by comparing spike counts.
   Focused regions produce more spikes than unfocused ones, consistent with human visual perception~\cite{jin2021analyze}.
    }
    \label{fig:workflow}
    \vspace{-1.5em}
\end{figure}
Before the advent of deep learning (DL), MFIF methods were performed in the transform~\cite{liu2015general} and spatial domains~\cite{zhang2017boundary}.
These methods often rely heavily on manually designed fusion rules and strong prior assumptions, which may not always be applicable or sufficiently accurate.

In recent years, DL-based approaches have become the dominant paradigm in MFIF, achieving remarkable progress~\cite{liu2020multi}.
Some studies have applied DL techniques to end-to-end image fusion frameworks~\cite{zhang2021sdnet,10145843,10794610,11208806,guan2023mutual}. Different end-to-end methods may use various network structures, but they generally consist of an encoder network for extracting image features, a fusion network for merging features, and a decoder network for reconstructing the fused features. 
Another line of studies draws inspiration from spatial domain methods, suggesting the use of DL techniques to generate decision maps~\cite{9178463,9805468, qiao2023boosting, 9647937,hu2023zmff}. This process involves training a network to map the source image to a focus map, ultimately producing decision maps. 
Compared to end-to-end methods, decision-based methods directly retain information from the source image at the pixel level, avoiding the spatial consistency issues of end-to-end methods. 
Therefore, more research has been done on DL-based decision map methods.
However, DL-based decision map methods still have evident drawbacks. The most crucial one is that the internal working mechanism is difficult to explain, hindering a clear understanding of the mapping between source images and the decision map~\cite{wang2023multi}.
This will result in nuisance false edges and burrs in the generated decision map, as can be intuitively observed in \cref{fig:workflow}(a).

Coupled neural P (CNP) systems~\cite{8502927} are a class of neural-like computing models
inspired by synchronous pulse bursts in a mammal’s visual cortex.
A CNP system generates synchronous spikes through the mutual coupling of neurons, where synchronous pulses of neurons are treated as 0 or 1.
After multiple iterations, the system produces a sequence of binary images that record the firing states of each neuron at every time step.
By comparing the spike counts of neurons, the degree of focus corresponding to each pixel can be effectively estimated, enabling the differentiation between focused and defocused regions.
\emph{This is particularly well-suited for decision map generation in the MFIF task, which serves as the core motivation for introducing CNP systems into MFIF.}
However, when CNP systems are directly applied to MFIF, neurons may suffer abnormal continuous firing. In this case, the spike counts cannot accurately reflect the focus differences of the source images, which affects the quality of the decision map.
This issue stems from an inappropriate constraint between network parameters and input signals.

To address this problem, we first conduct a detailed study on the neurodynamics of CNP neurons. 
This analysis reveals the underlying constraints between network parameters and input signals, thereby preventing abnormal continuous firing in neurons.
A solid understanding of the neural firing mechanism is helpful for setting appropriate parameters and provides a theoretical basis for effectively applying CNP systems in MFIF.
Building on the neurodynamics-driven CNP system, we then propose a new fusion method called \textbf{ND-CNPFuse}. 
As illustrated in \cref{fig:workflow}(b), ND-CNPFuse provides an interpretable and accurate decision map generation workflow, in contrast to existing DL-based MFIF methods that rely on black-box networks (\cref{fig:workflow}(a)).

In conclusion‌, this paper makes three contributions:

\begin{enumerate}

\item{We conduct a detailed analysis of the neurodynamic mechanisms of CNP neurons, which is essential to clarify the constraints between network parameters and input signals. To the best of our knowledge, this is the first study to investigate the neurodynamics of CNP systems.}

\item{ 
We propose a bio-inspired neural network-based method called ND-CNPFuse to address the challenge of generating accurate decision maps in MFIF tasks.
The advantages of ND-CNPFuse are that it requires no training and the process of generating decision maps is interpretable.
}

\item{Extensive experimental results show that ND-CNPFuse achieves state-of-the-art performance on four MFIF datasets.
Furthermore, ND-CNPFuse consistently outperforms baseline CNP in six commonly used quantitative metrics, with an average improvement of 5.73\%.

}
\end{enumerate}
\section{Related Work}
\subsection{DL-based MFIF methods}
\bfsection{End-to-end methods} These methods directly generate fused images by leveraging the fitting ability of deep neural networks. 
Representative approaches include convolutional neural networks~\cite{zhang2020ifcnn,9151265,zhang2020rethinking,10323520}, generative adversarial networks~\cite{zhang2021mff,8625482}, Transformer architectures~\cite{9812535,zhu2024task}, and diffusion models~\cite{cao2024conditional,11162636,zhang2024text}.
However, maintaining spatial consistency with source images remains challenging.

\bfsection{Decision-map methods} 
These methods formulate MFIF as a pixel-level classification task by predicting decision maps for focused and defocused regions.
Liu \etal~\cite{liu2017multi} first applied this technique to MFIF, but the generated decision maps remain limited in precision.
Subsequent studies improved performance using ensemble learning~\cite{amin2019ensemble}, attention mechanisms~\cite{9242278}, regression pair learning~\cite{9020016}, multi-scale architectures~\cite{liu2021multiscale}, edge preservation~\cite{wang2023multi}, and defocus blur modeling~\cite{quan2025multi}.
Nevertheless, unclear network mechanisms often introduce pseudo-edges and burr artifacts, and generating high-quality decision maps remains to be explored.

\subsection{Coupled neural P systems}
Coupled neural P (CNP) systems are a class of multi-parameter, single-layer, locally connected neural networks. Their key feature, the coupled spiking mechanism, enables efficient image feature extraction and has been applied in image fusion tasks~\cite{li2021medical,peng2021multi,10399820}. 
However, existing CNP systems heavily rely on manually tuned parameters. This may lead to neurons in invalid states, thus limiting the model's performance in image fusion tasks.
In this paper, we analyze the dynamics of CNP neurons and establish the constraint relationships between network parameters and input signals, which can make the CNP system easier to use.
\section{Neurodynamics-Driven CNP Systems}
In this section, we first review the definition of the CNP system and then analyze its neurodynamic properties in detail.
\subsection{Mathematical model of CNP systems}
In CNP systems, each neuron has three memory units: the feeding input unit ($U$), the linking input unit ($V$), and the dynamic threshold unit ($T$). The update rule for CNP neuron $\sigma_{ij}$ states is defined as follows:
\begin{equation}
\resizebox{\linewidth}{!}{$
U_{ij}(t)=\left\{
\begin{array}{ll}
     \!\!U_{ij}(t-1) - \!u +\! I_{ij} +\!\!\!\!\! \sum\limits_{\sigma_{kl}\in \delta_r}\!\!\!W_{kl} P_{kl}(t-1), & \text{if}~\sigma_{ij}~\text{fires} \\
    \!\!U_{ij}(t-1) +  I_{ij}+\sum\limits_{\sigma_{kl}\in \delta_r}W_{kl} P_{kl}(t-1), & \text{otherwise} \\
\end{array}
\right.
$}
\end{equation}

\begin{equation}
\resizebox{\linewidth}{!}{$
V_{ij}(t)=\left\{
\begin{array}{ll}
     V_{ij}(t-1) - v + \sum\limits_{\sigma_{kl}\in \delta_r}W_{kl} P_{kl}(t-1), & \text{if}~ \sigma_{ij} ~\text{fires} \\
    V_{ij}(t-1) + \sum\limits_{\sigma_{kl}\in \delta_r}W_{kl} P_{kl}(t-1), & \text{otherwise} \\
\end{array}
\right.
$}
\end{equation}
\begin{flalign}
& T_{ij}(t)=\left\{
\begin{array}{ll}
    T_{ij}(t-1) - \tau + \lambda P_{ij}(t-1), & \text{if}~ \sigma_{ij} ~\text{fires} \\
    T_{ij}(t-1), & \text{otherwise}
\end{array}
\right. &
\end{flalign}
where subscripts $\left ( i,j \right )$ denote the coordinates of the neuron, $\left ( k,l \right )$ denote the coordinates of its neighboring neurons, and $t$ represents the current iteration count. 
$u, v, \tau$ indicate the spikes consumed by three units.
$I_{ij}$ is an external input.
$W_{kl}$ characterizes synaptic weight. 
$P_{kl}$ symbolizes spikes received from the surrounding neurons $\sigma_{kl}$, where neighborhood radius is $r$ ($\sigma_{kl} \in \delta_r$). $\lambda$ is a threshold weight.

\subsection{Why neurodynamics analysis is necessary?}
In certain cases, CNP neurons may enter a state of \emph{continuous firing}, generating spikes at each time step.
As illustrated in \cref{fig:firing}(c) and \cref{fig:firing}(d).
This uncontrolled spiking behavior can lead to model failure, as the output no longer reflects the features of the input signal. For the MFIF task, such outputs do not effectively distinguish between focused and defocused regions, thereby affecting the generation of accurate decision maps. 
We next analyze this invalid state and identify parameter configurations that prevent it.


\bfsection{Analysis of continuous firing condition} We analyze the continuous firing condition of CNP neurons to derive constraints between parameters and the input signal.
For continuously firing CNP neurons, the state update is given by:
\begin{align}
U(t) &= U(t-1) - u + I + \sum\limits_{\sigma_{kl} \in \delta_r} W_{kl} P_{kl}(t-1), \label{eq:U} \\
V(t) &= V(t-1) - v + \sum\limits_{\sigma_{kl} \in \delta_r} W_{kl} P_{kl}(t-1), \label{eq:v} \\
T(t) &= T(t-1) - \tau + \lambda P(t-1). \label{eq:T}
\end{align}

Following~\cite{8502927}, we approximate the spike consumption of the three CNP units as linear decay to facilitate analysis.



\begin{equation}
\begin{aligned}
U(t) - u = \alpha U(t),
V(t) - v = \beta V(t),
T(t) - \tau= \gamma T(t),
\end{aligned}
\end{equation}
where $\alpha, \beta, \gamma \in \left ( 0,1 \right )$. We have the following theorem:

\begin{figure*}[t]
	\centering
	\includegraphics[width=0.85\textwidth]{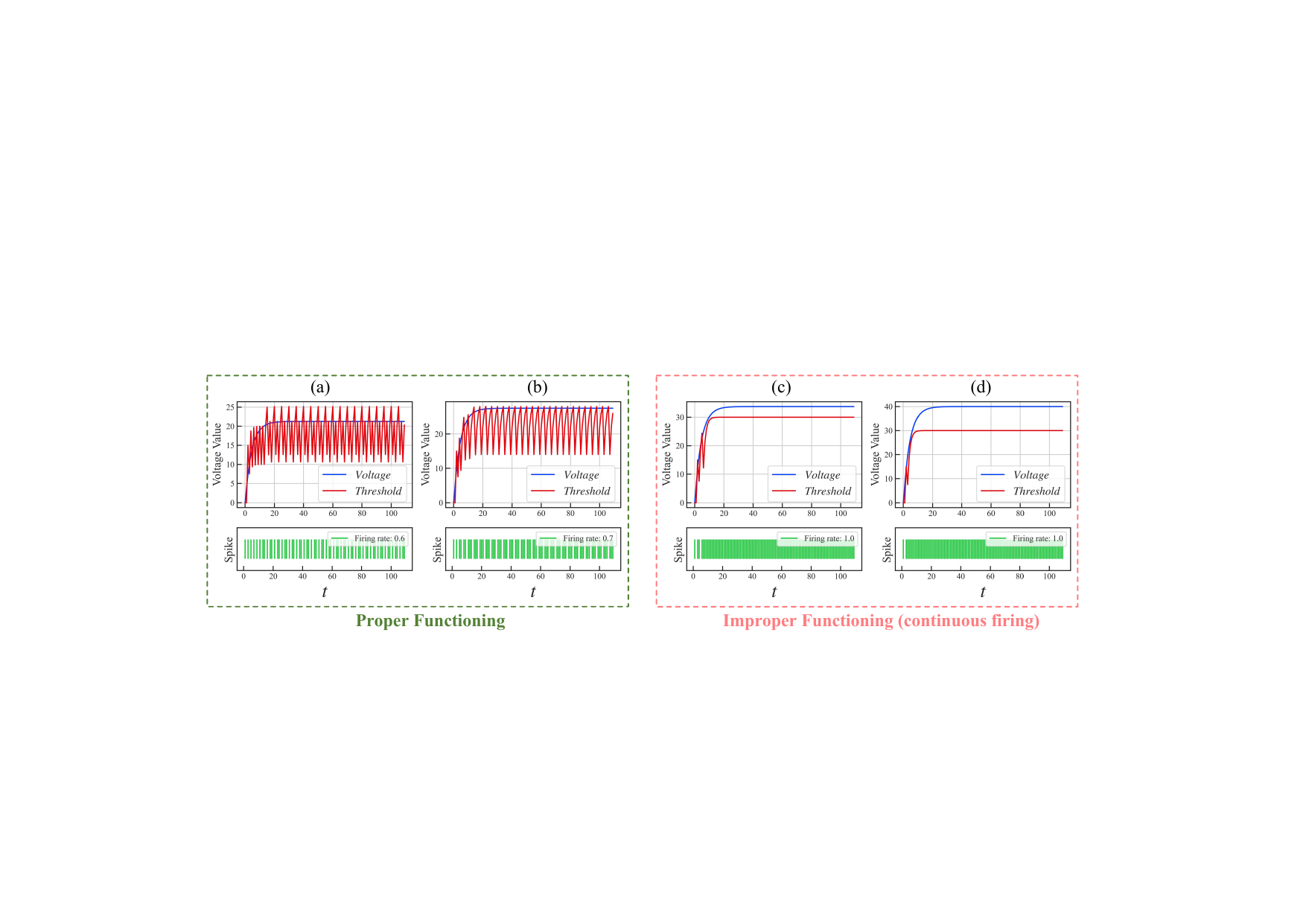}
	\caption{CNP neuron firing states under different inputs. (a) $I=0.5$; (b) $I=1.0$; (c) $I=1.5$; and (d) $I=2.0$.
When $I$ exceeds the continuous firing condition, the neuron exhibits an abnormal state, as observed in (c) and (d).
}
        \label{fig:firing}
        \vspace{-0.5em}
\end{figure*}

\begin{Thm}\label{th1}
    For the feeding input unit U, the value increases with iterations and is represented as
\begin{align}
U(t) & = I\frac{1-\alpha^t}{1-\alpha } +\sum_{n=0}^{t-1}K(n)\alpha^{t-n-1}.  
\end{align}
\end{Thm}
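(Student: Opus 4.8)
The plan is to treat \cref{eq:U} as a first-order linear recurrence in $U(t)$ and solve it explicitly by unrolling. First I would substitute the linear consumption rule $U(t)-u=\alpha U(t)$ into \cref{eq:U}, which rewrites the ``if $\sigma_{ij}$ fires'' branch as
\begin{equation}
U(t+1)=\alpha U(t) + I(t) + \sum_{\sigma_{kl}\in\delta_r} W_{kl}P_{kl}(t).
\end{equation}
Abbreviating the coupling term by $K(t) := \sum_{\sigma_{kl}\in\delta_r} W_{kl}P_{kl}(t)$ and taking the external input constant, $I(t)\equiv I$ (as in the firing-state experiments of \cref{fig:firing}), this becomes the clean recurrence $U(t+1)=\alpha U(t)+I+K(t)$.

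Next I would iterate this recurrence from a zero (or absorbed) initial condition. Unrolling $t$ times gives $U(t)=\sum_{i=0}^{t-1}\alpha^{t-1-i}\bigl(I+K(i)\bigr)$, and splitting the sum separates the contribution of the constant input from that of the coupling: the $I$-part is a finite geometric series $I\sum_{i=0}^{t-1}\alpha^{t-1-i}=I\sum_{j=0}^{t-1}\alpha^{j}=I\frac{1-\alpha^{t}}{1-\alpha}$, while the $K$-part is exactly $\sum_{i=0}^{t-1}K(i)\alpha^{t-i-1}$. Assembling the two pieces yields the claimed closed form. Monotonicity in $t$ — the statement that $U(t)$ ``increases with iterations'' — then follows from $\alpha\in(0,1)$ together with the nonnegativity of $I$ and of the synaptic/spike terms $W_{kl}P_{kl}$: the geometric prefactor $\frac{1-\alpha^{t}}{1-\alpha}$ is increasing in $t$, and each newly added coupling term is nonnegative, so the difference $U(t+1)-U(t)$ is nonnegative.

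I expect the main obstacle to be bookkeeping the initial/boundary condition and justifying the standing assumptions rather than the algebra itself: one must argue that, under the ``$t\to\infty$, all neurons fire'' hypothesis stated just before the theorem, the recurrence genuinely has the homogeneous form above (i.e. the fired-branch is always the active one, so no case split survives in the limit), and one must fix the convention $U(0)=0$ (or otherwise account for a transient $U(0)\alpha^{t}$ term that vanishes geometrically). A secondary subtlety is that $K(i)$ is itself state-dependent, so the ``closed form'' is really a resolved recurrence treating $K(i)$ as a driving sequence; I would note this explicitly and observe that since all $W_{kl}P_{kl}\ge 0$, the term $\sum_{i=0}^{t-1}K(i)\alpha^{t-i-1}$ is a nonnegative, nondecreasing-in-contribution quantity, which is all that is needed for the monotonicity claim and for the subsequent analysis of the continuous-firing threshold.
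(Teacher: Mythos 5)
Your proposal is correct and follows essentially the same route as the paper: substitute the linear consumption rule to get the recurrence $U(t+1)=\alpha U(t)+I+K(t)$ with $U(0)=K(0)=0$, unroll it, and sum the geometric series for the constant-input part (the paper simply lists $U(1),U(2),U(3),\dots$ and reads off the pattern rather than writing the general unrolled sum). Your added remarks on the zero initial condition, the constancy of $I$, and treating $K(i)$ as a driving sequence are consistent with, and slightly more explicit than, the paper's argument.
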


\begin{proof}
Let us clarify the initial conditions: \( U(0) \) and \( K(0) \) are both zero. Define $K(n) = \sum_{\sigma_{kl} \in \delta_r} W_{kl} P_{kl}(n)$. Next, we describe \( U(1), U(2), U(3), \ldots, U(t) \) step by step:
\begin{equation}
\begin{aligned}
U(0) & =0 \\
U(1) & =\alpha U(0)+I +K(0)=I \\
U(2) & =\alpha U(1)+I +K(1)=\alpha I+I +K(1)\\
& \vdots \\
U(t) & = I\frac{1-\alpha^t}{1-\alpha } +\sum_{n=0}^{t-1}K(n)\alpha^{t-n-1}.
\end{aligned}
\end{equation}


Theorem~\ref{th1} is proved.
\end{proof}

\begin{Thm}\label{th2}
   For the linking input unit V, the value grows continuously with the increasing number of iterations and can be represented by the following formula:
\begin{align}
V(t) & = \sum_{n=0}^{t-1}K(n)\beta^{t-n-1}.  
\end{align} 
\end{Thm}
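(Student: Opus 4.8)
The plan is to mirror the proof of Theorem~\ref{th1} almost verbatim, since the linking unit $V$ obeys the same linear recurrence as $U$ except that it lacks the external input term $I(t)$. First I would fix the initial conditions $V(0)=0$ and $K(0)=0$ (matching the convention used for $U$), and recall that by the linear consumption rule $V(t)-v=\beta V(t)$ the update equation~\eqref{eq:v} under continuous firing collapses to $V(t+1)=\beta V(t)+K(t)$, where $K(i)=\sum_{\sigma_{kl}\in\delta_r}W_{kl}P_{kl}(t-1)$ as defined previously.

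Next I would unroll the recurrence step by step, exactly as in the proof of Theorem~\ref{th1}: write out $V(1)=\beta V(0)+K(0)=0$, $V(2)=\beta V(1)+K(1)=K(1)$, $V(3)=\beta V(2)+K(2)=\beta K(1)+K(2)$, and so on, so the pattern $V(t)=\sum_{i=0}^{t-1}K(i)\beta^{t-i-1}$ becomes visually evident. To make the induction rigorous I would assume the formula holds at step $t$ and substitute into $V(t+1)=\beta V(t)+K(t)$, obtaining $\beta\sum_{i=0}^{t-1}K(i)\beta^{t-i-1}+K(t)=\sum_{i=0}^{t-1}K(i)\beta^{t-i}+K(t)=\sum_{i=0}^{t}K(i)\beta^{t-i}$, which is precisely the claimed expression with $t$ replaced by $t+1$. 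For the monotonicity claim (``grows continuously''), I would note that since $\beta\in(0,1)$ and the synaptic weights $W_{kl}$ and received spikes $P_{kl}$ are nonnegative, every term $K(i)\geq 0$, so $V(t+1)-V(t)=K(t)+(\beta-1)\sum_{i=0}^{t-1}K(i)\beta^{t-i-1}$; a cleaner route is to observe directly that $V(t+1)=\beta V(t)+K(t)\geq V(t)$ would need $\beta\ge 1$, so instead I would phrase the growth statement as the accumulation of nonnegative contributions $K(i)$ with geometric discounting, matching how the paper phrases the analogous assertion for $U$.

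I do not expect any genuine obstacle here — the result is a one-line consequence of the $U$-analysis with the $I$-term deleted, and the paper's own proof of Theorem~\ref{th1} supplies the template. The only point requiring mild care is consistency of notation: the definition $K(i)=\sum_{\sigma_{kl}\in\delta_r}W_{kl}P_{kl}(t-1)$ as written is slightly ambiguous (the index $i$ versus the running time $t$), so I would restate it as $K(i)=\sum_{\sigma_{kl}\in\delta_r}W_{kl}P_{kl}(i)$ to make the telescoping unambiguous, and I would flag that the ``grows continuously'' language is really a qualitative statement about the nonnegative, geometrically-weighted accumulation rather than strict monotonicity in the absence of further assumptions on the $K(i)$. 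With that settled, the proof is a direct unrolling plus a one-step induction, closing with ``Theorem~\ref{th2} is proved.''
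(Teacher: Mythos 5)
Your proposal matches the paper's own proof: both set $V(0)=0$ and unroll the recurrence $V(t+1)=\beta V(t)+K(t)$ term by term to read off $V(t)=\sum_{i=0}^{t-1}K(i)\beta^{t-i-1}$. Your added induction step and the clarified indexing of $K(i)$ are minor tightenings of the same argument, so the approach is essentially identical.
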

\begin{proof} Similarly, the initial state $ V(0) = 0$. We list $V(1)$, $V(2)$, $V(3)$, $V(4)$, ..., and $V(t)$. 
\begin{equation}
\begin{aligned}
V(0) & =0 \\
V(1) & =\beta V(0)+ K(0)=0 \\
V(2) & =\beta V(1)+ K(1)= K(1)\\
V(3) & =\beta V(2) + K(2)=\beta K(1) + K(2)\\
& \vdots \\
V(t) & = \sum_{n=0}^{t-1}K(n)\beta^{t-n-1}.  
\end{aligned}
\end{equation}

Theorem~\ref{th2} is proved.
\end{proof}

\begin{Thm}\label{th3}
 For the dynamic threshold unit T, the value increases with iterations and is expressed as follows
 \begin{align}
T(t) & = \lambda \frac{1-\gamma^{t-1} }{1-\gamma }. 
\end{align} 
\end{Thm}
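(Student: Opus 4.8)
The plan is to mirror the term-by-term expansion used in the proofs of Theorem~\ref{th1} and Theorem~\ref{th2}, specialized to the threshold recurrence. First I would fix the initial conditions consistently with the other two units, namely $T(0)=0$ and, since the neuron emits no spike before the dynamics begin, $P(0)=0$. Substituting the linear consumption rule $T(t)-\tau=\gamma T(t)$ into the firing branch of the update equation~\eqref{eq:T} collapses it to the clean scalar recurrence $T(t+1)=\gamma T(t)+\lambda P(t)$, which is all that the continuous-firing assumption leaves to analyze.

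Next I would unroll this recurrence step by step. From $T(0)=0$ and $P(0)=0$ we get $T(1)=0$; thereafter continuous firing forces $P(t)=1$ for every $t\ge 1$, so $T(2)=\lambda$, $T(3)=\gamma\lambda+\lambda$, and in general each step appends a fresh $\lambda$ while scaling the accumulated total by $\gamma$. Collecting terms yields the finite geometric sum $T(t)=\lambda\bigl(1+\gamma+\cdots+\gamma^{t-2}\bigr)$, which I would close into $T(t)=\lambda\frac{1-\gamma^{t-1}}{1-\gamma}$ via the standard partial-sum identity (legitimate because $\gamma\in(0,1)$, so $1-\gamma\neq 0$). A one-line induction on $t$, with base case $t=1$ giving $T(1)=0$, then certifies the closed form for all $t\ge 1$.

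The only subtlety — and the step I would watch most carefully — is the index bookkeeping: the exponent is $t-1$ rather than $t$ because the first nonzero contribution to $T$ appears only at $t=2$ (the neuron must fire once before its threshold is driven up), in contrast to the feeding unit $U$, which already receives the external input $I$ at $t=1$. Getting this offset right is exactly what distinguishes the factor $\frac{1-\gamma^{t-1}}{1-\gamma}$ appearing here from the factor $\frac{1-\alpha^{t}}{1-\alpha}$ in Theorem~\ref{th1}; beyond that, the argument is the same routine geometric-series manipulation, with no genuine obstacle.
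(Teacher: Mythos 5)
Your proposal is correct and follows essentially the same route as the paper: unrolling the recurrence $T(t+1)=\gamma T(t)+\lambda P(t)$ from $T(0)=P(0)=0$ under the continuous-firing assumption $P(t)=1$, and summing the resulting geometric series to obtain $T(t)=\lambda\frac{1-\gamma^{t-1}}{1-\gamma}$. Your remark on the index offset ($t-1$ rather than $t$, since the first nonzero contribution appears at $t=2$) correctly captures the only delicate point and is consistent with the paper's derivation.
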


\begin{proof}
$T$ denotes the dynamic threshold with initial state \( T(0) = 0 \), and $P$ denotes the output with initial state \( P(0) = 0 \).
A continuously firing neuron maintains \( P(t) = 1 \) as the firing condition always holds. Thus, we can get:

\begin{equation}
\begin{aligned}
T(0) & =0 \\
T(1) & =\gamma T(0) + \lambda P(0)=0 \\
T(2) & =\gamma T(1) + \lambda P(1)=\lambda \\
T(3) & =\gamma T(2) + \lambda P(2)=\gamma \lambda + \lambda \\
& \vdots \\
T(t) & = \lambda \frac{1-\gamma^{t-1} }{1-\gamma }.
\end{aligned}
\end{equation}

Theorem~\ref{th3} is proved.
\end{proof}

\begin{figure*}[t]
	\centering
	\includegraphics[width=0.85\textwidth]{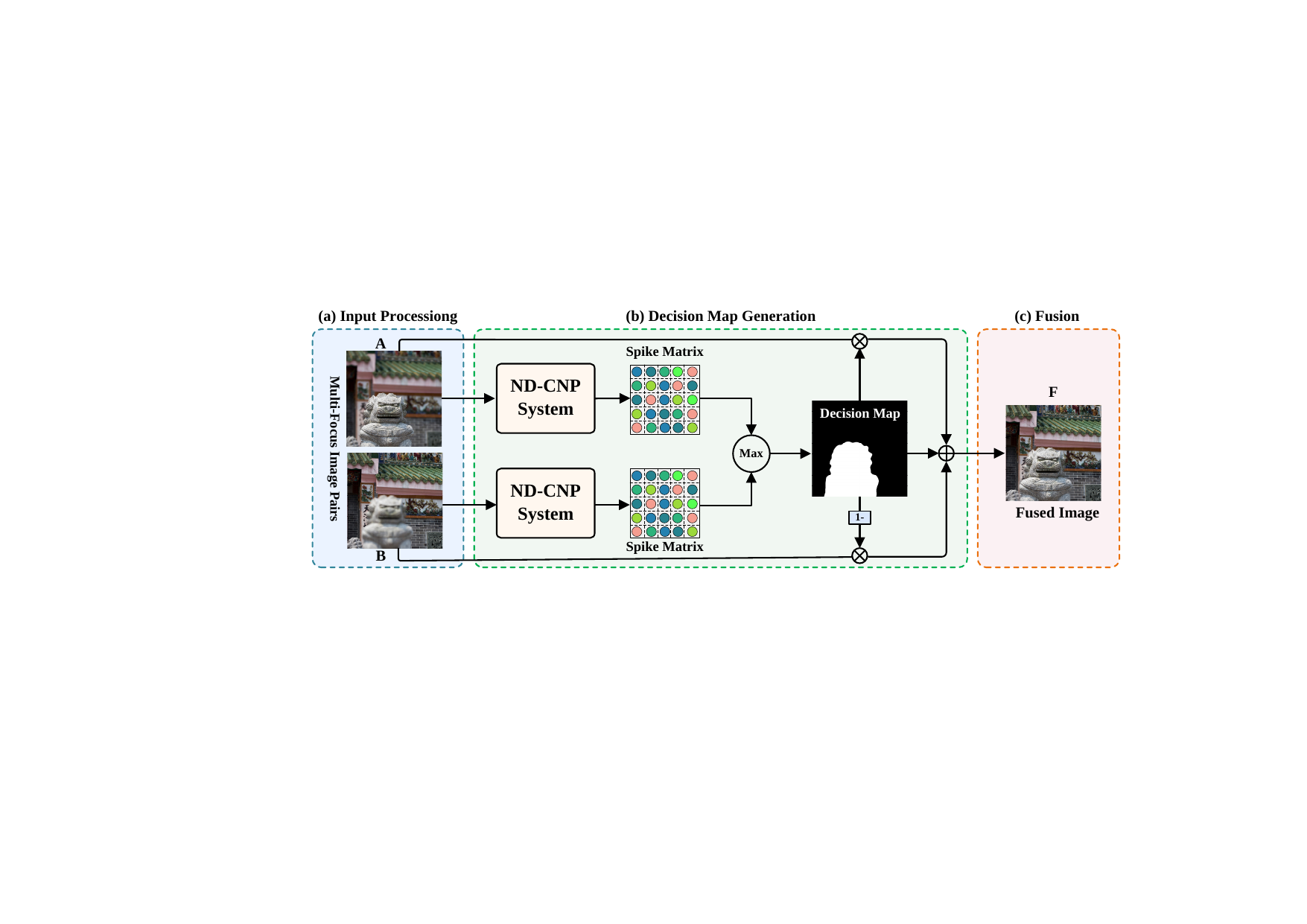}
	\caption{Overview of ND-CNPFuse for MFIF.
    Fused image F quality primarily relies on the decision map generated by the ND-CNP system.
    A and B are the multi-focus source images.
    $\boldsymbol{\otimes}$ and $\boldsymbol{\oplus}$ denote element-wise multiplication and addition.}
        \label{fig:framework}
        \vspace{-0.5em}
\end{figure*}

Based on the above three basic theories, we can further obtain the condition that neurons are in sustained firing, i.e., the following Theorem~\ref{th4}.
\begin{Thm}\label{th4}
Relationship between external input and internal parameters of neurons in the continuous firing state
\begin{equation}
I>\frac{\lambda(1-\alpha)(1-\beta)}{(1-\gamma)(1-\beta+\operatorname{sum}(W))}-\operatorname{sum}(W).
\end{equation}
\end{Thm}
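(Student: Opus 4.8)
The plan is to combine the three closed-form expressions from Theorems~\ref{th1}--\ref{th3} with the firing condition of a CNP neuron, and then pass to the limit $t\to\infty$. First I would recall (or make explicit) the firing rule: a neuron $\sigma_{ij}$ fires at step $t$ precisely when its internal potential exceeds the dynamic threshold, i.e.\ when $U_{ij}(t)\bigl(1+V_{ij}(t)\bigr) > T_{ij}(t)$ (the standard coupled feeding/linking modulation used in CNP systems). For a continuously firing neuron this inequality must hold for all sufficiently large $t$, so it must hold in the limit. Substituting $K(i)=\operatorname{sum}(W)$ for every $i$ (since in the continuous-firing regime every neighbour emits a spike $P_{kl}=1$, so $\sum_{\sigma_{kl}\in\delta_r}W_{kl}P_{kl}(t-1)=\sum_{kl}W_{kl}=:\operatorname{sum}(W)$) into the formulas of the three theorems is the key simplification.

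Next I would take $t\to\infty$ using $\alpha,\beta,\gamma\in(0,1)$ so that $\alpha^t,\beta^t,\gamma^t\to 0$. From Theorem~\ref{th1},
\begin{equation}
U(\infty)=\frac{I}{1-\alpha}+\operatorname{sum}(W)\sum_{i=0}^{\infty}\alpha^{i}=\frac{I+\operatorname{sum}(W)}{1-\alpha}.
\end{equation}
From Theorem~\ref{th2},
\begin{equation}
V(\infty)=\operatorname{sum}(W)\sum_{i=0}^{\infty}\beta^{i}=\frac{\operatorname{sum}(W)}{1-\beta}.
\end{equation}
From Theorem~\ref{th3},
\begin{equation}
T(\infty)=\frac{\lambda}{1-\gamma}.
\end{equation}
Then I would plug these three limits into the firing inequality $U(\infty)\bigl(1+V(\infty)\bigr)>T(\infty)$, which becomes
\begin{equation}
\frac{I+\operatorname{sum}(W)}{1-\alpha}\cdot\frac{1-\beta+\operatorname{sum}(W)}{1-\beta}>\frac{\lambda}{1-\gamma}.
\end{equation}
Solving this for $I$ — multiplying both sides by $(1-\alpha)(1-\beta)\big/\bigl(1-\beta+\operatorname{sum}(W)\bigr)$ and subtracting $\operatorname{sum}(W)$ — yields exactly
\begin{equation}
I>\frac{\lambda(1-\alpha)(1-\beta)}{(1-\gamma)\bigl(1-\beta+\operatorname{sum}(W)\bigr)}-\operatorname{sum}(W),
\end{equation}
which is the claimed bound.

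The step I expect to be the main obstacle is justifying the interchange of limit and the "all neurons fire" hypothesis rigorously: the derivation implicitly assumes that once the system is in the continuous-firing regime, the neighbour spikes $P_{kl}(t)$ are identically $1$ and the linear consumption model $U(t)-u=\alpha U(t)$ etc.\ holds with fixed $\alpha,\beta,\gamma$, so that $K(i)$ is the constant $\operatorname{sum}(W)$. One must argue this is self-consistent — i.e.\ that the limiting potentials indeed satisfy the firing inequality, closing the loop — rather than merely necessary. I would handle this by treating the theorem as a \emph{necessary} condition: if a neuron fires forever, then the monotone (increasing) sequences $U(t),V(t),T(t)$ converge to the stated limits, and the persistent firing forces the limiting inequality, hence the bound on $I$. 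Conversely one notes the inequality is also sufficient, since each of $U(t),V(t)$ is increasing toward its limit while $T(t)$ is bounded by $\lambda/(1-\gamma)$, so the firing condition, once met in the limit sense, is met at every step — but the formal statement only asserts the inequality, so the necessity direction suffices to complete the proof. A minor secondary point is the off-by-one in $T(t)=\lambda(1-\gamma^{t-1})/(1-\gamma)$ versus the $\gamma^{t-1}\to0$ limit, which is harmless as $t\to\infty$.
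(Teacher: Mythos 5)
Your proposal is correct and follows essentially the same route as the paper: invoke the firing criterion $U(t)\bigl(1+V(t)\bigr)>T(t)$, substitute the closed forms from Theorems~\ref{th1}--\ref{th3} with $K(i)=\operatorname{sum}(W)$ in the continuous-firing regime, let $t\to\infty$ using $\alpha,\beta,\gamma\in(0,1)$, and solve the resulting inequality for $I$. Your added remarks on necessity versus sufficiency and the harmless $\gamma^{t-1}$ off-by-one go slightly beyond the paper's derivation but do not change the argument.
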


\begin{proof}
    Continuous firing requires the fundamental criterion \(U(t) \cdot (1 + V(t)) > T(t)\) to hold at all times. Substituting results $U(t)$, $V(t)$ and $T(t)$ obtained by Theorems~\ref{th1},~\ref{th2} and~\ref{th3}, we can obtain:
    
    \begin{equation}
\label{eq:relationship}
\begin{split}
& \biggl( I\frac{1-\alpha^t}{1-\alpha} + \sum_{n=0}^{t-1} K(n)\alpha^{t-n-1} \biggl) 
  \biggl( 1 + \sum_{n=0}^{t-1} K(n)\beta^{t-n-1} \biggl) \\
& \quad \quad \quad \quad \quad \quad \quad \quad > 
  \lambda \frac{1-\gamma^{t-1}}{1-\gamma}.
\end{split}
\end{equation}

Since the central CNP neuron and its surrounding neurons all continue to fire, that is, \( P_{kl}(t-1) \equiv 1 \), and we obtain \( K(n) = \mathrm{sum}(W) \), where \( \mathrm{sum}(W) \) denotes the sum of its elements.
Therefore, Eq.~\eqref{eq:relationship} can be expressed as:

\begin{equation}
\resizebox{\linewidth}{!}{$
\begin{split}
& \left ( I\frac{1-\alpha^t}{1-\alpha } + \text{sum}\left ( W \right )\frac{1-\alpha^{t}}{1-\alpha }\right ) \left ( 1+ \text{sum}\left ( W \right )\frac{1-\beta^{t}}{1-\beta }\right )\\
& \quad \quad \quad \quad  \quad \quad \quad  \quad \quad > \lambda \frac{1-\gamma^{t-1} }{1-\gamma }.
\end{split} 
$}
\end{equation}

When $t$ is sufficiently large, the system approaches stability. 
We can obtain Eq.~\eqref{eq:continuous_fire_condition}:

\begin{align}
\label{eq:continuous_fire_condition}
I>\frac{\lambda(1-\alpha)(1-\beta)}{(1-\gamma)(1-\beta+\operatorname{sum}(W))}-\operatorname{sum}(W).
\end{align}

Theorem~\ref{th4} is proved. More detailed derivations are provided in Appendix A.1.
\end{proof}

To verify the accuracy of Theorem~\ref{th4}, we set up a group of values to test: $W = [0.1, 0.5, 0, 0.5, 0.1]$, $\alpha = 0.8$, $\beta = 0.2$, $\gamma = 0.5$, and $\lambda = 15$. The continuous firing condition is calculated based on Eq.~\eqref{eq:continuous_fire_condition}.

\begin{align}
I> \frac{15 \times 0.8 \times 0.2}{(1-0.5 )(1-0.2 +1.2)}-1.2 =1.2.
\end{align}

\cref{fig:firing} shows the firing states of a CNP neuron for different input signals. When external input does not exceed $I$, the neuron can function properly, with firing rates of \textbf{60\%} and \textbf{70\%}, respectively. 
As shown in \cref{fig:firing}(a) and \cref{fig:firing}(b).
In contrast, inputs exceeding $I$ lead to continuous firing with \textbf{100\%} firing rates (\cref{fig:firing}(c) and \cref{fig:firing}(d)). The experimental results are consistent with our theoretical analysis, confirming the correctness of the mathematical derivation.

\begin{corollary}\label{corollary}
According to Theorem~\ref{th4}, the external input to a CNP neuron should remain below its internal state to prevent uncontrolled continuous firing, as formulated below


\begin{align}
\label{eq:no_continuous_fire_condition}
I \le \frac{\lambda (1-\alpha )(1-\beta )}{(1-\gamma )(1-\beta +\operatorname{sum}(W))}-\operatorname{sum}(W).
\end{align}

\end{corollary}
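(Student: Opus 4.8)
The plan is to derive Corollary~\ref{corollary} as the contrapositive of Theorem~\ref{th4}. Theorem~\ref{th4} establishes that a CNP neuron can remain in the abnormal continuous-firing regime only if
\begin{equation*}
I>\frac{\lambda(1-\alpha)(1-\beta)}{(1-\gamma)(1-\beta+\operatorname{sum}(W))}-\operatorname{sum}(W),
\end{equation*}
so negating this necessary condition immediately yields the complementary bound \eqref{eq:no_continuous_fire_condition}, under which the neuron cannot sustain continuous firing and therefore preserves its capacity to distinguish focused from defocused inputs. Once Theorem~\ref{th4} is in hand, the statement follows essentially by inspection.

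To spell this out, I would first recall the firing criterion $U(t)\,(1+V(t)) > T(t)$ together with the closed forms for $U(t)$, $V(t)$, and $T(t)$ from Theorems~\ref{th1}--\ref{th3}, valid under the standing hypothesis that every neuron fires. Since $\alpha,\beta,\gamma \in (0,1)$, the geometric series in those expressions converge, so $U(t)$, $V(t)$, and $T(t)$ each tend to a finite limit and the firing criterion passes to the limiting inequality used in the proof of Theorem~\ref{th4}. Reversing that inequality reproduces exactly \eqref{eq:no_continuous_fire_condition}; under this condition the limiting criterion fails, contradicting the assumption of perpetual firing, and hence the invalid state is ruled out.

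The step requiring the most care is the passage from ``necessary condition for continuous firing'' to ``sufficient condition for proper functionality'': Theorem~\ref{th4} is phrased as a $t\to\infty$ limit statement, so I must argue that violating it genuinely prevents continuous firing rather than merely being consistent with its absence. I would close this gap by noting that each partial sum appearing in $U(t)$, $V(t)$, $T(t)$ is an increasing, bounded sequence (a positive geometric series), so the limits are approached monotonically; consequently, if $U(\infty)\,(1+V(\infty)) \le T(\infty)$ then at some finite iteration the dynamic threshold overtakes $U(t)\,(1+V(t))$, the neuron is forced to reset, and the perpetual-firing hypothesis breaks. This shows that \eqref{eq:no_continuous_fire_condition} is the desired guarantee of proper functionality and completes the argument.
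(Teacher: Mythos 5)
Your proposal matches the paper's own treatment: the corollary is stated there without a separate proof, simply as the negation (contrapositive) of the continuous-firing condition of Theorem~\ref{th4}, which is exactly your route. Your extra monotone-convergence justification goes beyond what the paper supplies and is sound for the strict-inequality case, though note that the ``threshold overtakes at some finite iteration'' step is not airtight when equality holds in \eqref{eq:no_continuous_fire_condition} (both sides can approach a common limit with the firing criterion never violated), a boundary case the paper itself also glosses over.
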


Corollary~\ref{corollary} reveals the necessary constraints between parameters and input signals for the proper functioning of CNP neurons.
Detailed configurations of CNP neuron parameters such as $u$, $v$, and $\tau$ derived from Corollary~\ref{corollary} are provided in Appendix A.2.
Noting that the internal parameters of neurons are automatically configured based on input MFIF images, eliminating the manual tuning, and demonstrating \emph{transferability and generalization across different datasets}.
This helps to improve the ability of CNP systems to distinguish between focused and defocused regions in MFIF tasks, thereby improving the quality of the generated decision map.

\section{Methodology}
\subsection{Problem statement}
For a pair of multi-focus images A and B with the same size, MFIF aims to fuse the respective clear regions from images A and B to produce a fully focused image. An efficient and commonly used method is to generate a decision map (DM) of the source images. This pixel-level operation ensures that the fused image F maximizes the retention of information from the source images. As follows:
\begin{equation}
\label{eq:fusion}
\text{F}(i,j) =\text{A}(i,j) \times \text{DM}(i,j) + \text{B}(i,j) \times  (1-\text{DM}(i,j)),
\end{equation}
where $(i,j)$ represents the pixel coordinates in the images.

Eq.~\eqref{eq:fusion} shows that DM accuracy directly impacts fused image quality.
We employ neurodynamics-driven CNP (ND-CNP) systems to generate accurate DM.
Without loss of generality, we also provide details on handling \emph{more than two input images} in Appendix B.

\subsection{Overview of the architecture}
\begin{figure}[t]
	\centering
	\includegraphics[width=\linewidth]{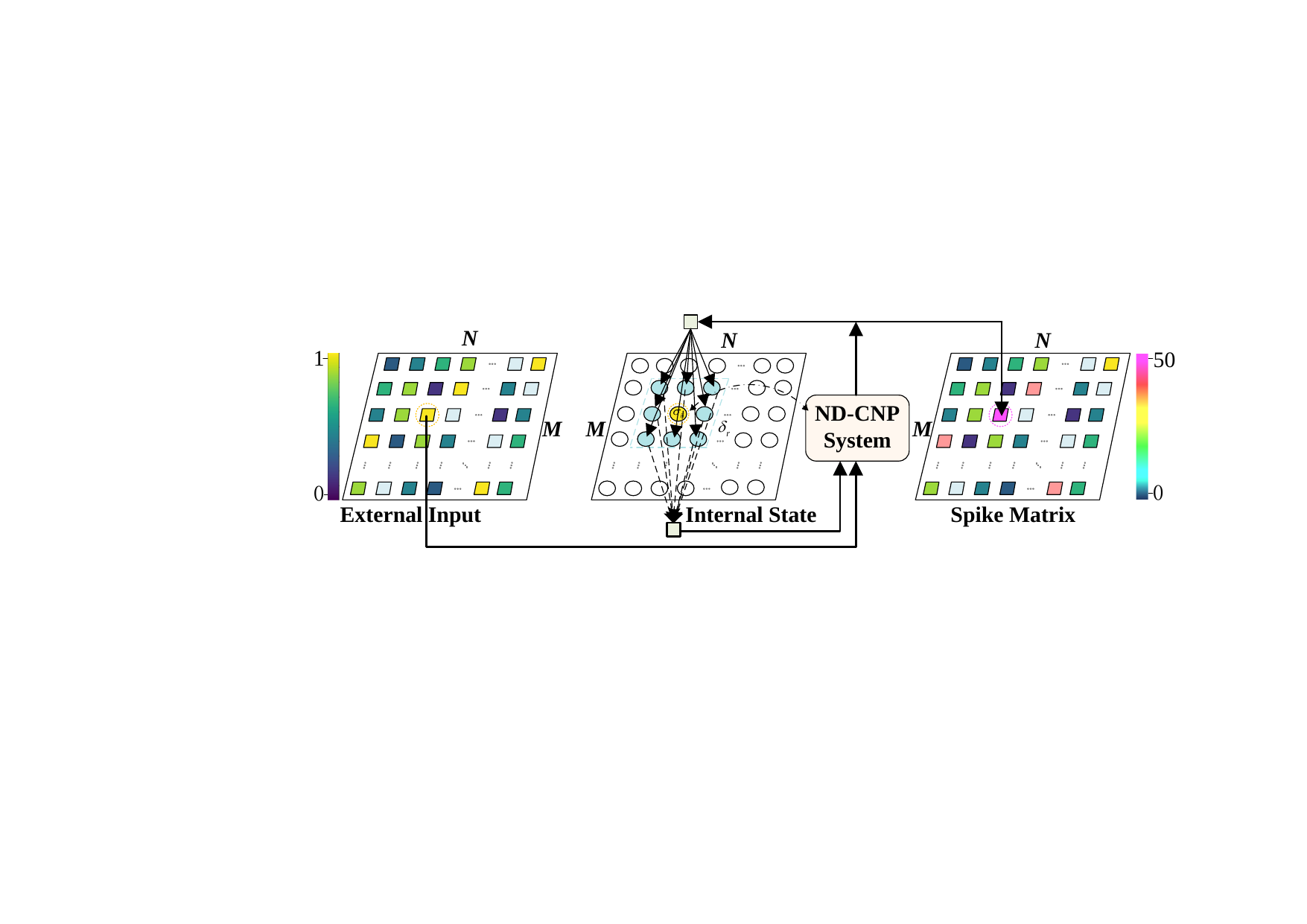}
    \caption{Working principle of the ND-CNP system. $M$ and $N$ denote image height and width.
     Please zoom in for a better view.
    }
   \label{fig:working_principle}
   \vspace{-1em}
\end{figure}

\begin{figure*}[!ht]
\centering
  \includegraphics[width=\linewidth]{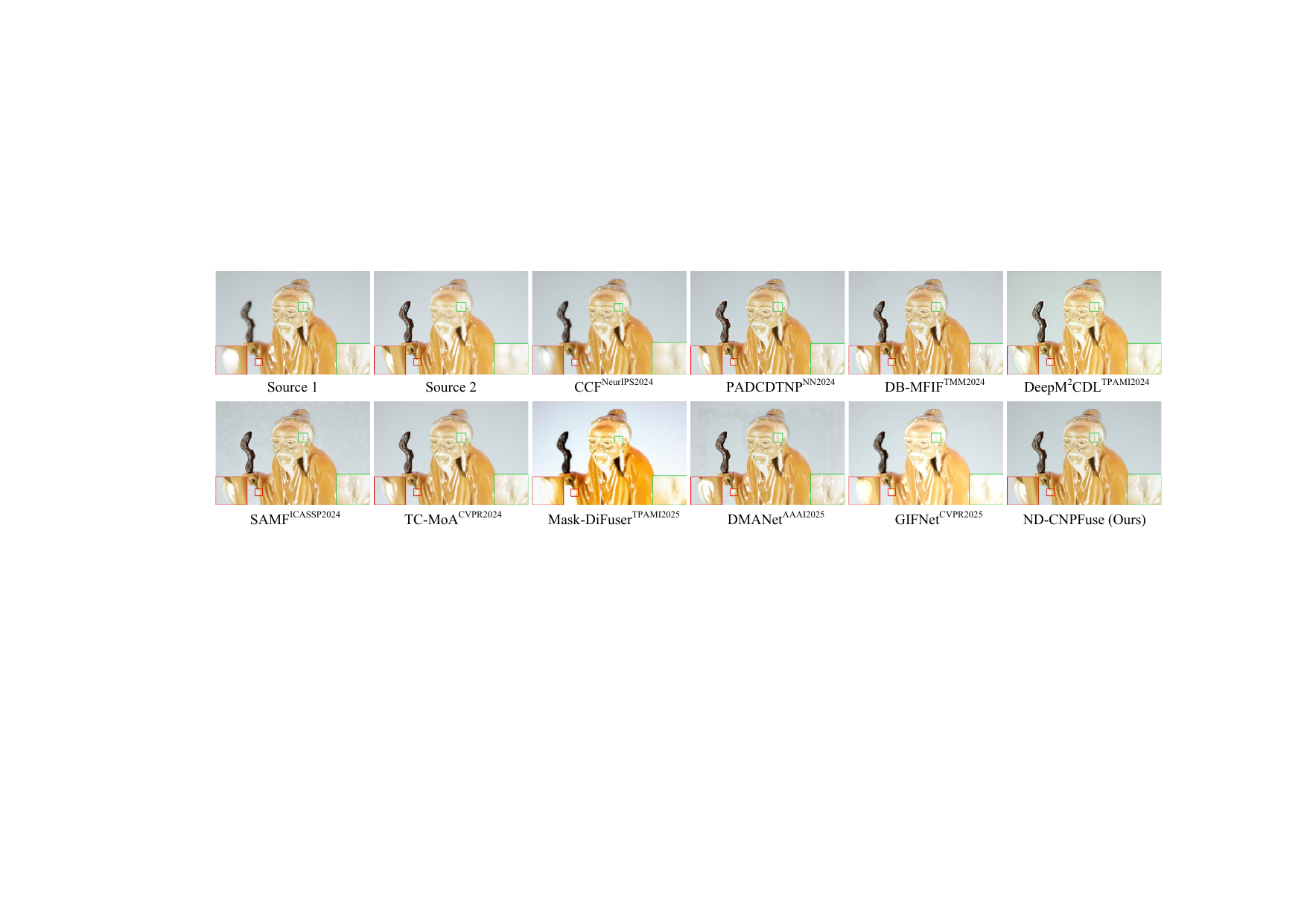}
  \caption{Qualitative results on ``MFFW-4'' from MFFW, with red and green boxes zoomed in 4 times for easy observation.}
  \label{fig:MFFW-4}
\end{figure*}

\begin{figure*}[!ht]
\centering
  \includegraphics[width=\linewidth]{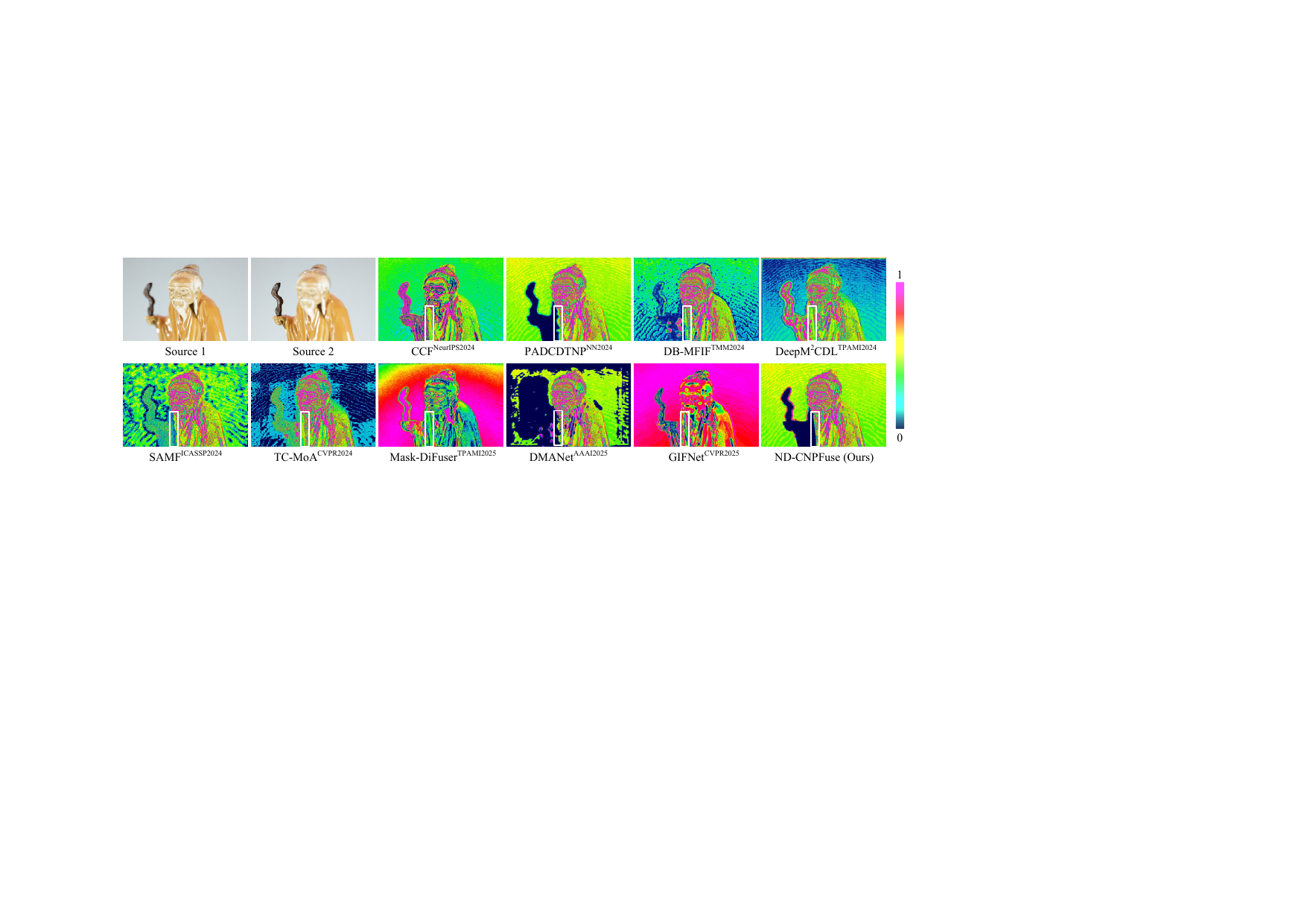}
  \caption{Visualization of difference images, obtained by subtracting Source 1 from the fused results. \textbf{Fewer residuals indicate better information preservation}. White boxes highlight regions for easy comparison. Zooming in for better observation.}
  \label{fig:diff-image}
  \vspace{-1em}
\end{figure*}

The ND-CNPFuse framework is illustrated in \cref{fig:framework}, comprising three components: (a) input processing, (b) decision map generation based on the ND-CNP system, and (c) fusion. The details are described as follows.

\bfsection{(a) Input processing} The pixel values of the source image reflect only simple and low-dimensional information about the image, such as brightness or color. Using pixel values directly as input limits neuron firing. In contrast, image features can provide richer input signals. For this reason, we use sum-modified Laplacian (SML)~\cite{huang2007evaluation} to preprocess the source image.
Further analysis of the impact of SML on the entire framework is detailed in \cref{sec:ablation}.

\bfsection{(b) Decision map generation} 
We use $\Phi_A$ and $\Phi_B$ to represent two ND-CNP systems associated with source multi-focus images A and B. The SML-preprocessed A and B serve as external inputs to $\Phi_A$ and $\Phi_B$, respectively. 
\cref{fig:working_principle} illustrates the detailed working principle of the model.
The systems run from the initial state until reaching the maximum number of iterations. Subsequently, $\Phi_A$ and $\Phi_B$ output two spiking matrices, $SM_A$ and $SM_B$, where each position in $SM_A$ (or $SM_B$) represents the neuron firing counts in $\Phi_A$ (or $\Phi_B$). Note that the mechanism of coupling neurons indicates that neurons exhibit cooperative firing phenomena (with a coupling radius of $r$) rather than independent firing. Therefore, based on $SM_A$ and $SM_B$, the number of firing maps within their coupling radius $r$ is denoted as $F_A$ and $F_B$, respectively. As follows:
\begin{equation}
    \begin{aligned}
& F_{A}(i, j)=\sum_{(x, y) \in r(i, j)}SM_{A}(x,y), \\
& F_{B}(i, j)=\sum_{(x, y) \in r(i, j)}SM_{B}(x,y),
\end{aligned}
\end{equation}
where $r(i,j)$ is the coupling radius centered at position $(i, j)$. Further, we can obtain the DM:
\begin{equation}
\text{DM}(i,j)=\left\{
\begin{array}{ll}
     1, & \text{if}~ F_{A}(i,j)> F_{B}(i,j)  \\
     0, & \text{otherwise} \\
\end{array}.
\right.
\end{equation}

\bfsection{(c) Fusion} After the ND-CNP system generates the DM, we can get the final fused image F according to Eq.~\eqref{eq:fusion}.

\begin{table*}[!htbp]
  \centering
  \caption{Quantitative evaluation results of MFIF tasks. \textbf{Bold} and \underline{underlined} indicate the highest and second-highest scores.}
  \renewcommand{\arraystretch}{1.3} 
  \fontsize{9pt}{10pt}\selectfont
  \setlength{\tabcolsep}{1.3mm}
  \resizebox{\linewidth}{!}{
    \begin{tabular}{@{}lccccccccccccc@{}}
\toprule
                               & \multicolumn{6}{c}{\textbf{Multi-focus Image Fusion on Lytro}}                                                                                                                                                              &                                & \multicolumn{6}{c}{\textbf{Multi-focus Image Fusion on MFFW}}                                                                                                                                                                \\ 
\cmidrule(l){2-7}\cmidrule(l){9-14}
                               & Qabf$\uparrow$                     & FMIw$\uparrow$                   & SSIM$\uparrow$                     & PSNR$\uparrow$                       & QP$\uparrow$                       & QCB$\uparrow$                      &                                & Qabf$\uparrow$                     & FMIw$\uparrow$                   & SSIM$\uparrow$                     & PSNR$\uparrow$                       & QP$\uparrow$                       & QCB$\uparrow$                       \\ 
\midrule
CCF  \hfill\graytext{[NeurIPS2024]}                        & 0.4907                            & 0.3426                            & \underline{0.8525} & 25.1774                           & 0.5868                            & 0.6121                            &                         & 0.4496                          & 0.2866                            & \textbf{0.8384} & 22.6547                           & 0.4320                            & 0.5498                            \\
PADCDTNP  \hfill\graytext{[NN2024]}                      & \underline{0.7613} & \underline{0.5916} & 0.8417                            & 26.9731                            & 0.8394                            & \underline{0.8075} &                       & 0.7367                            & \underline{0.5268} & 0.8224                            & 23.1335  & \underline{0.7104} & 0.7274                             \\
DB-MFIF   \hfill\graytext{[TMM2024]}                      & 0.7479                            & 0.5027                            & 0.8402                            & 26.5203                           & 0.8398                            & 0.7775                            &                        & 0.6958                            & 0.4290                            & 0.8207                            & 23.3339                            & 0.6620                            & 0.6642                             \\
Deep$\mathrm{\text{M}^{2}}$CDL \hfill\graytext{[TPAMI2024]}& 0.7181                            & 0.4623                            & 0.8387                            & 26.0761                           & 0.8092                            & 0.7313                            &  & 0.6718                            & 0.4150                            & 0.8311                            & 23.3246                           & 0.6664                            & 0.6557                             \\
SAMF       \hfill\graytext{[ICASSP2024]}                     & 0.7511                            & 0.5585                            & 0.8401                            & 26.8899                           & 0.8256                            & 0.7951                            &                          & 0.6292                            & 0.3009                            & 0.7984                            & 23.2530                            & 0.5586                            & 0.6693                             \\
TC-MoA  \hfill\graytext{[CVPR2024]}                       & 0.7401                            & 0.5274                            & 0.8415                            & 26.1720                           & 0.8188                            & 0.7597                            &                         & 0.6034                            & 0.2826                            & 0.8216                            & 23.4429                            & 0.5307                            & 0.6475                             \\
Mask-DiFuser  \hfill\graytext{[TPAMI2025]}                       & 0.6004                             & 0.3917                            & 0.8182                             & 25.0254                             & 0.6765                             & 0.5991                             &                        & 0.5585                             & 0.3556                            & 0.7889                             & 22.6972                             & 0.5367                             & 0.5486                              \\
DMANet   \hfill\graytext{[AAAI2025]}                      & 0.7606                            & 0.5807                            & 0.8396                            & \underline{26.9807} & \underline{0.8439} & 0.8055                            &                        & \underline{0.7384} & 0.5247                            & 0.8119                            & \underline{23.6877}                           & 0.7073                            & \underline{0.7279}  \\
GIFNet  \hfill\graytext{[CVPR2025]}                         & 0.5194                            & 0.3268                            & 0.7854                            & 25.9368                            & 0.5412                            & 0.6042                            &                         & 0.4509                            & 0.2524                            & 0.7653                            & 22.5889                            & 0.3850                            & 0.5433                             \\
\rowcolor{gray!15}
\textbf{ND-CNPFuse (Ours)}              & \textbf{0.7621}  & \textbf{0.5967}  & \textbf{0.8541}  & \textbf{26.9904}  & \textbf{0.8466}  & \textbf{0.8092}  &            & \textbf{0.7399}  & \textbf{0.5434}  & \underline{0.8362}  & \textbf{23.7154} & \textbf{0.7294}  & \textbf{0.7291}   \\ 
\midrule
                               & \multicolumn{6}{c}{\textbf{Multi-focus Image Fusion on MFI-WHU}}                                                                                                                                                            &                                & \multicolumn{6}{c}{\textbf{Multi-focus Image Fusion on Real-MFF}}                                                                                                                                                                      \\ 
\cmidrule(l){2-7}\cmidrule(l){9-14}
                               & Qabf$\uparrow$                     & FMIw$\uparrow$                   & SSIM$\uparrow$                     & PSNR$\uparrow$                       & QP$\uparrow$                       & QCB$\uparrow$                      &                                & Qabf$\uparrow$                     & FMIw$\uparrow$                   & SSIM$\uparrow$                     & PSNR$\uparrow$                       & QP$\uparrow$                       & QCB$\uparrow$                       \\ 
\midrule
CCF  \hfill\graytext{[NeurIPS2024]}                       & 0.4972                           & 0.3793                           & \underline{0.8389} & 25.9154                            & 0.5614                          & 0.6691                            &                        & 0.7179                           & 0.4120                           & 0.9458 & 30.0386                           & 0.7743                           & 0.7118                             \\
PADCDTNP \hfill\graytext{[NN2024]}                       & 0.7311                            & 0.6193                            & 0.8316                            & 27.0261                           & 0.7526                            & 0.8231                            &                     & 0.8203                            & \underline{0.6502} & \underline{0.9517}                            & 32.1256                            & \underline{0.9213} & 0.8318                             \\
DB-MFIF  \hfill\graytext{[TMM2024]}                      & 0.7192                            & 0.5278                            & 0.8317                            & \underline{27.6825}                            & 0.7509                            & 0.7813                            &                   & 0.7934                            & 0.5353                            & 0.9481                            & 33.4508                          & 0.8940                            & 0.7990                             \\
Deep$\mathrm{\text{M}^{2}}$CDL \hfill\graytext{[TPAMI2024]}& 0.6793                            & 0.4992                            & 0.8346                            & 26.8603                           & 0.7385                            & 0.7426                            &  & 0.7628                            & 0.5124                            & 0.9384                            & 31.1958                          & 0.8678                            & 0.7784                             \\
SAMF    \hfill\graytext{[ICASSP2024]}                        & 0.7263                            & \underline{0.6248} & 0.8304                            & 27.6122                           & 0.7571                            & 0.8203                            &                           & 0.7093                            & 0.3293                            & 0.9330                            & 30.8761                            & 0.7215                            & 0.7707                             \\
TC-MoA \hfill\graytext{[CVPR2024]}                        & 0.6557                            & 0.5170                            & 0.8388                            & 27.6032                            & 0.7098                            & 0.7660                            &                       & 0.7915                            & 0.5109                            & 0.9501                            & 32.8831                            & 0.8906                            & 0.7971                             \\
Mask-DiFuser \hfill\graytext{[TPAMI2025]}                          & 0.5573                            & 0.4306                            &0.8031                            & 26.3362                            & 0.6230                             & 0.6032                             &                         & 0.6561                           & 0.4534                             & 0.8446                            & 31.5767                            & 0.7927                             & 0.6520                              \\
DMANet    \hfill\graytext{[AAAI2025]}                     & \underline{0.7315} & \underline{0.6248} & 0.8302                            & 27.6611  & \underline{0.7567} & \underline{0.8237} &                        & \underline{0.8205} & 0.6484                            & 0.9507                            & \underline{34.0174} & 0.9194                            & \underline{0.8362}  \\
GIFNet    \hfill\graytext{[CVPR2025]}                     & 0.4388                            & 0.3536                            & 0.7523                            & 26.1136                            & 0.4642                            & 0.5500                            &                        & 0.4834                            & 0.3572                            & 0.8471                            & 30.5346                            & 0.6467                            & 0.6041                             \\
\rowcolor{gray!15}
\textbf{ND-CNPFuse (Ours) }            & \textbf{0.7346}  & \textbf{0.6268}  & \textbf{0.8400}  & \textbf{27.7140} & \textbf{0.7611}  & \textbf{0.8265}  &           & \textbf{0.8207}  & \textbf{0.6666}  & \textbf{0.9569}  & \textbf{34.2024}  & \textbf{0.9237}  & \textbf{0.8398}   \\
\bottomrule
    \end{tabular}
    }
  \label{tab:quantitative}
\end{table*}

\begin{table*}[!ht]
\centering
\caption{Runtime comparison of different fusion methods. PADCDTNP, SAMF, and our method are executed on an i5-13400 CPU, while the remaining methods run on an A100 GPU. Our method reports dual runtimes: 0.41s (MATLAB) and 0.18s (C++) per image pair.}
\fontsize{9pt}{10pt}\selectfont
\setlength{\tabcolsep}{1.4mm} 
 \resizebox{\linewidth}{!}{
\begin{tabular}{lcccccccccc}
\toprule
Method & CCF & PADCDTNP & DB-MFIF & Deep$\mathrm{\text{M}^{2}}$CDL & SAMF & TC-MoA & Mask-DiFuser&DMANet & GIFNet & Ours \\
\midrule
Time (s) & 153.68 & 1.35 & 0.47 & 14.35 & 0.43 & 1.24 &3.61& \underline{0.21} & 0.56& \cellcolor{gray!0}0.41/\textbf{0.18} \\
\bottomrule
\end{tabular}
}
\label{tab:runtime_comparison}
\vspace{-0.9em}
\end{table*}

\section{Experiments}
\subsection{Experimental configurations}
\bfsection{Dataset} To fully evaluate the effectiveness of NDCNPFuse, we conducted extensive experiments covering four datasets: Lytro~\cite{nejati2015multi}, MFFW~\cite{xu2020towards}, MFI-WHU~\cite{zhang2021mff}, and Real-MFF~\cite{zhang2020real}. 
The Lytro includes 20 pairs of color images. 
The MFFW comprises 13 pairs of color images with defocus spread effects (DSE). 
The MFI-WHU consists of 120 pairs of artificially synthesized color images. 
The Real-MFF contains 710 pairs of real-world MFIF images. 

\bfsection{Comparison methods} We compare several SOTA methods on the MFIF task, including CCF~\cite{cao2024conditional}, PADCDTNP~\cite{li2024multi}, DB-MFIF~\cite{zhang2024exploit}, Deep$\mathrm {M^{2}}$CDL~\cite{10323520}, SAMF~\cite{li2024samf}, TC-MoA~\cite{zhu2024task}, Mask-DiFuser~\cite{11162636}, DMANet~\cite{quan2025multi}, and GIFNet~\cite{Cheng_2025_CVPR}. 
All comparison methods are reproduced using officially provided code and weights.
These methods cover representative end-to-end, decision-based, and spike-based approaches, providing a comprehensive baseline for evaluating fusion performance.

\bfsection{Evaluation metrics}
We use six metrics to evaluate the fusion results from multiple perspectives: edge-based similarity measurement (Qabf), mutual information for wavelet feature (FMIw), structural similarity index (SSIM), peak signal-to-noise ratio (PSNR), phase congruency (QP), and human perception-inspired metric (QCB). For all these metrics, a higher value indicates better fusion quality~\cite{liu2024rethinking}.

\bfsection{Implementation details} ND-CNPFuse runs on an Intel(R) Core(TM) i5-13400 CPU, and DL methods run on an Nvidia A100 GPU. 
We set the coupling radius $r$ to 16 and the number of iterations $t$ to 110, as detailed in \cref{sec:para_analysis}.

\subsection{Performance comparison}
\bfsection{Qualitative comparison}
\cref{fig:MFFW-4} shows the fusion results of different methods on the ``MFFW-4'' sample from the MFFW dataset. 
CCF, PADCDTNP, DB-MFIF, TC-MoA, and our ND-CNPFuse effectively detect the boundary between focused and defocused regions, yielding good visual effects.
However, the other methods encounter various issues.
SAMF and DMANet suffer from edge diffusion artifacts.
Deep$\mathrm {M^{2}}$CDL and GIFNet exhibit chromatic distortions.
Mask-DiFuser loses some spatial details.
The defocus spread effect present in MFFW dataset poses additional challenges for producing high-quality fusion results. Nevertheless, ND-CNPFuse maintains more competitive fusion performance compared to the other methods. 
More qualitative fusion results are provided in Appendix C.1.

We further visualize the difference images to compare fusion results, as shown in \cref{fig:diff-image}. Most methods show visible residuals, while PADCDTNP introduces boundary artifacts. In contrast, ND-CNPFuse fully preserves the focusing information from the source images. 
We also evaluate downstream salient object detection tasks in Appendix C.2.

\bfsection{Quantitative comparison}
\cref{tab:quantitative} reports the quantitative results of all methods on four datasets.
ND-CNPFuse ranks first in five of six evaluation metrics across four datasets.
It falls short of the best SSIM on MFFW as its spike count-based clarity measure may inadequately capture low-contrast edges, slightly impacting edge preservation.
Nevertheless, it still ranks second.  
We acknowledge that the numerical improvement over the second-best method is not significant. However, the contribution of our work lies not only in performance gains but also in offering a novel research perspective for the MFIF field.
We also include the classic spiking neurons LIF model~\cite{fang2023spikingjelly} for comparison, with detailed results provided in Appendix C.3.

\bfsection{Runtime comparison} 
\cref{tab:runtime_comparison} lists the running times of all fusion methods on Lytro.
Our MATLAB implementation achieves 0.41s per image pair. 
By migrating to C++, the runtime is reduced to 0.18s, showing even GPU-accelerated DMANet (0.21s) in CPU-only execution.
This confirms that our approach meets real-time MFIF requirements.

\bfsection{Energy consumption}
Following the energy calculation in~\cite{horowitz20141}, the energy consumption for ND-CNPFuse on the Lytro dataset ($520\times520$) is $1.12\times 10^{-5}$J per image pair.


\subsection{Parameter sensitivity analysis}\label{sec:para_analysis}
We adopt the information retention $Q_{percep}$~\cite{liu2024rethinking} to evaluate the effects of coupling radius and iteration count.

\bfsection{Impact of coupling radius}
We test coupling radius values of 0, 2, 4, 8, 16, and 32. As shown in \cref{fig:parameter_sensitivity}\subref{fig:couple_radius}, $Q_{percep}$ increases with radius and peaks at 16, then declines due to loss of detail and gradient information. Thus, $r$ is set to 16.

\bfsection{Impact of iteration count}
\cref{fig:parameter_sensitivity}\subref{fig:iter_count} shows $Q_{percep}$ results for $t$ values of 50 to 150. Performance improves with more iterations and stabilizes around $t=110$. To balance accuracy and efficiency, we set $t$ to 110.

In summary, the consistent performance across the four datasets demonstrates the universality of $t$ and $r$.

\begin{figure}[t]
	\centering
	\begin{subfigure}[b]{0.49\columnwidth}
		\includegraphics[width=\textwidth]{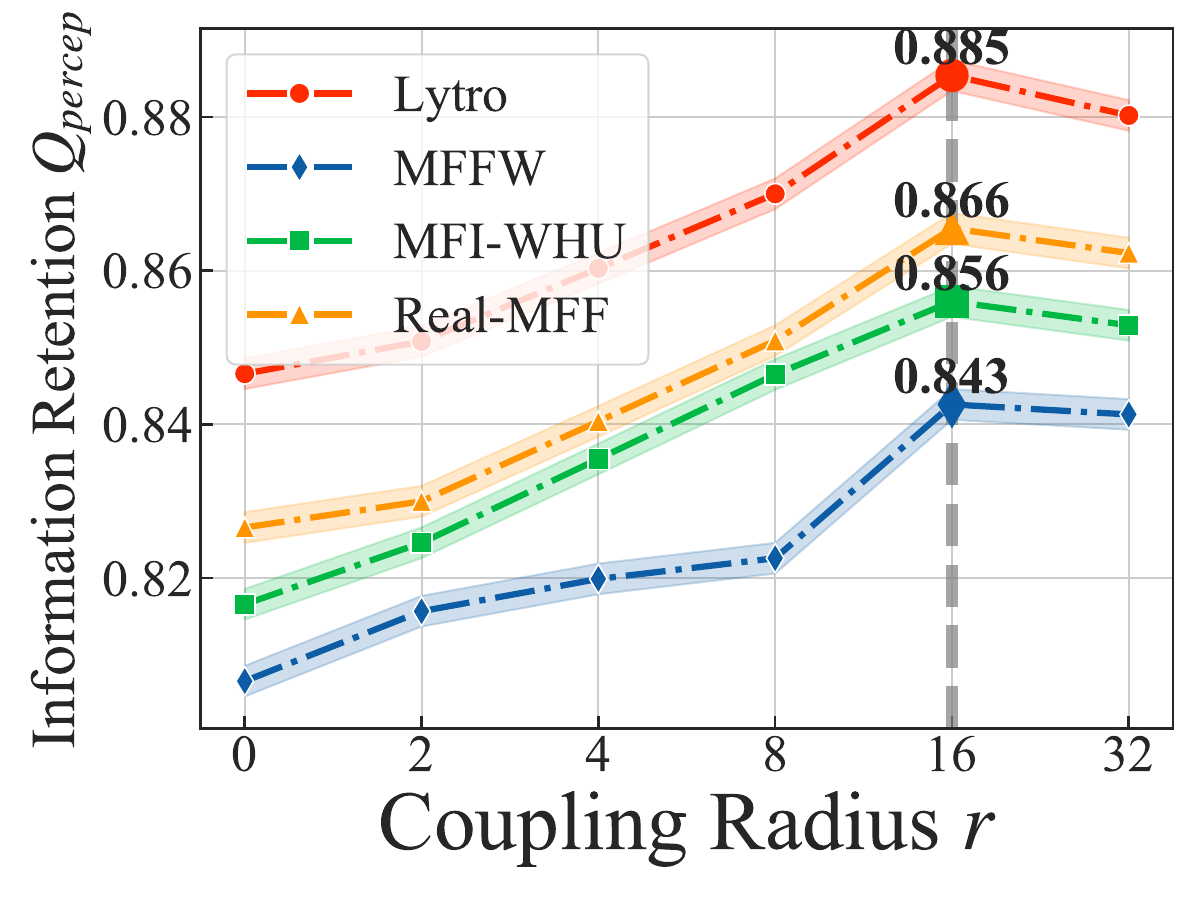}
        \vspace{-1.5em}  
		\caption{}
		\label{fig:couple_radius}
	\end{subfigure}
	\hfill
	\begin{subfigure}[b]{0.49\columnwidth}
		\includegraphics[width=\textwidth]{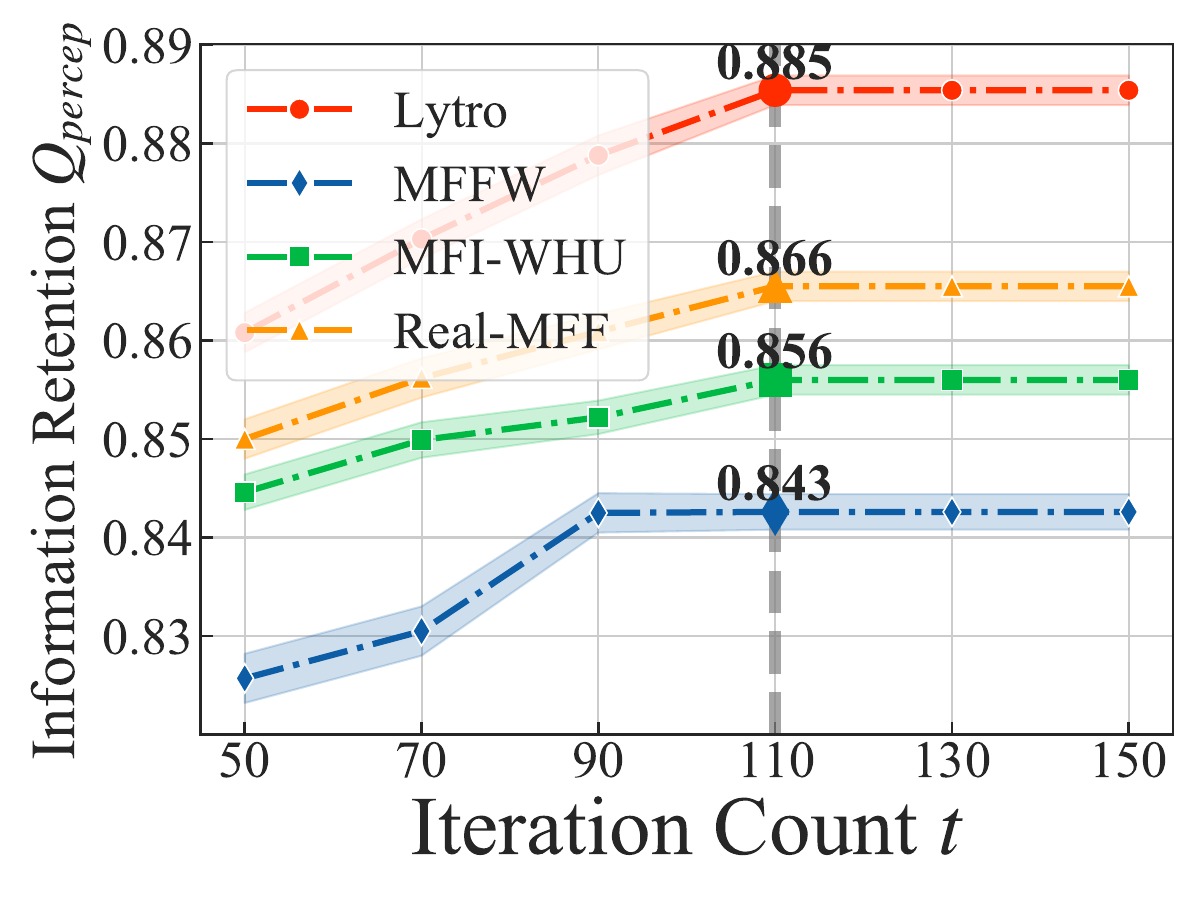}
        \vspace{-1.5em}  
		\caption{}
		\label{fig:iter_count}
	\end{subfigure}
    \vspace{-0.5em}
	\caption{Effect of coupling radius and iteration count on the information retention metric $Q_{percep}$ across four datasets.}
	\label{fig:parameter_sensitivity}
\end{figure}

\begin{table}[t]
\centering
\caption{Ablation study results on the Lytro dataset. \xmark: without neurodynamic analysis, \cmark: with neurodynamic analysis.}
\renewcommand{\arraystretch}{1.15} 
\fontsize{9pt}{10pt}\selectfont
\setlength{\tabcolsep}{1.8mm}  
 \resizebox{\linewidth}{!}{
\begin{tabular}{c|cccccc}
\toprule
Config.  & Qabf$\uparrow$ & FMIw$\uparrow$ & SSIM$\uparrow$ & PSNR$\uparrow$ & QP$\uparrow$ & QCB$\uparrow$ \\
\midrule

\xmark & 0.747 & 0.509 & 0.841 & 25.702 & 0.835 & 0.754 \\
\rowcolor{gray!15}
\cmark & \textbf{0.762} & \textbf{0.597} & \textbf{0.854} & \textbf{26.990} & \textbf{0.847} & \textbf{0.809} \\
$\Delta$~(\%) & {+2.01} & {+17.29} & {+1.55} & {+5.01} & {+1.44} & {+7.29} \\

\bottomrule
\end{tabular}}
\label{tab:ablation_neurodynamic}
\end{table}

\begin{table}[t]
\centering
\caption{Ablation study results on whether SML is used on Lytro.}
\renewcommand{\arraystretch}{1.15} 
\fontsize{9pt}{10pt}\selectfont
\setlength{\tabcolsep}{1.8mm}  
 \resizebox{\linewidth}{!}{
\begin{tabular}{c|cccccc}
\toprule
Config.  & Qabf$\uparrow$ & FMIw$\uparrow$ & SSIM$\uparrow$ & PSNR$\uparrow$ & QP$\uparrow$ & QCB$\uparrow$ \\
\midrule

\xmark~SML& 0.761 & 0.593 & 0.852 & 26.983 & \textbf{0.848} & 0.808 \\
\rowcolor{gray!15}
\cmark~SML& \textbf{0.762} & \textbf{0.597} & \textbf{0.854} & \textbf{26.990} &0.847 & \textbf{0.809} \\
\bottomrule
\end{tabular}}
\label{tab:ablation_SML}
\vspace{-1em}
\end{table}

\subsection{Ablation study}\label{sec:ablation}
\bfsection{Neurodynamic analysis} 
We conducted ablation studies on the Lytro dataset to validate the effectiveness of neurodynamic analysis in improving model performance. 
Specifically, we compare the baseline CNP system and the neurodynamics-driven CNP system across six metrics, as summarized in \cref{tab:ablation_neurodynamic}. 
We can observe that the neurodynamics-driven CNP system achieves significant improvements over the baseline across six multidimensional image quality metrics, with improvements of 2.01\%, 17.29\%, 1.55\%, 5.01\%, 1.44\%, and 7.29\%. 
These findings confirm the effectiveness of neurodynamic analysis.

\bfsection{SML module}
We further validated the impact of the SML module on overall architecture performance. According to the quantitative results in \cref{tab:ablation_SML}, after removing the SML module, evaluation metrics showed only a slight decline. This confirms the generalizability of the proposed method.
Of course, by applying SML, the quality of fusion can be further enhanced.
Moreover, SML is beneficial for handling noisy inputs. 
More details can be found in Appendix C.4.

We believe that ND-CNPFuse is unrestricted in the level of generalizability. 
First, all internal parameters of neurons are automatically configured via neurodynamics. Second, the introduction or removal of the SML module has no significant effect on the overall performance of the framework.
 
\subsection{Case study}
We perform a case study on the Lytro dataset samples. \cref{fig:case_study} shows decision maps from the baseline and ND-CNP systems. The ND-CNP system produces decision maps with clearer boundaries and higher accuracy. In contrast, the original CNP system has uncontrolled spike firing in some areas, resulting in obvious misjudgments in decision maps.

\begin{figure}
	\centering
	\includegraphics[width=\columnwidth]{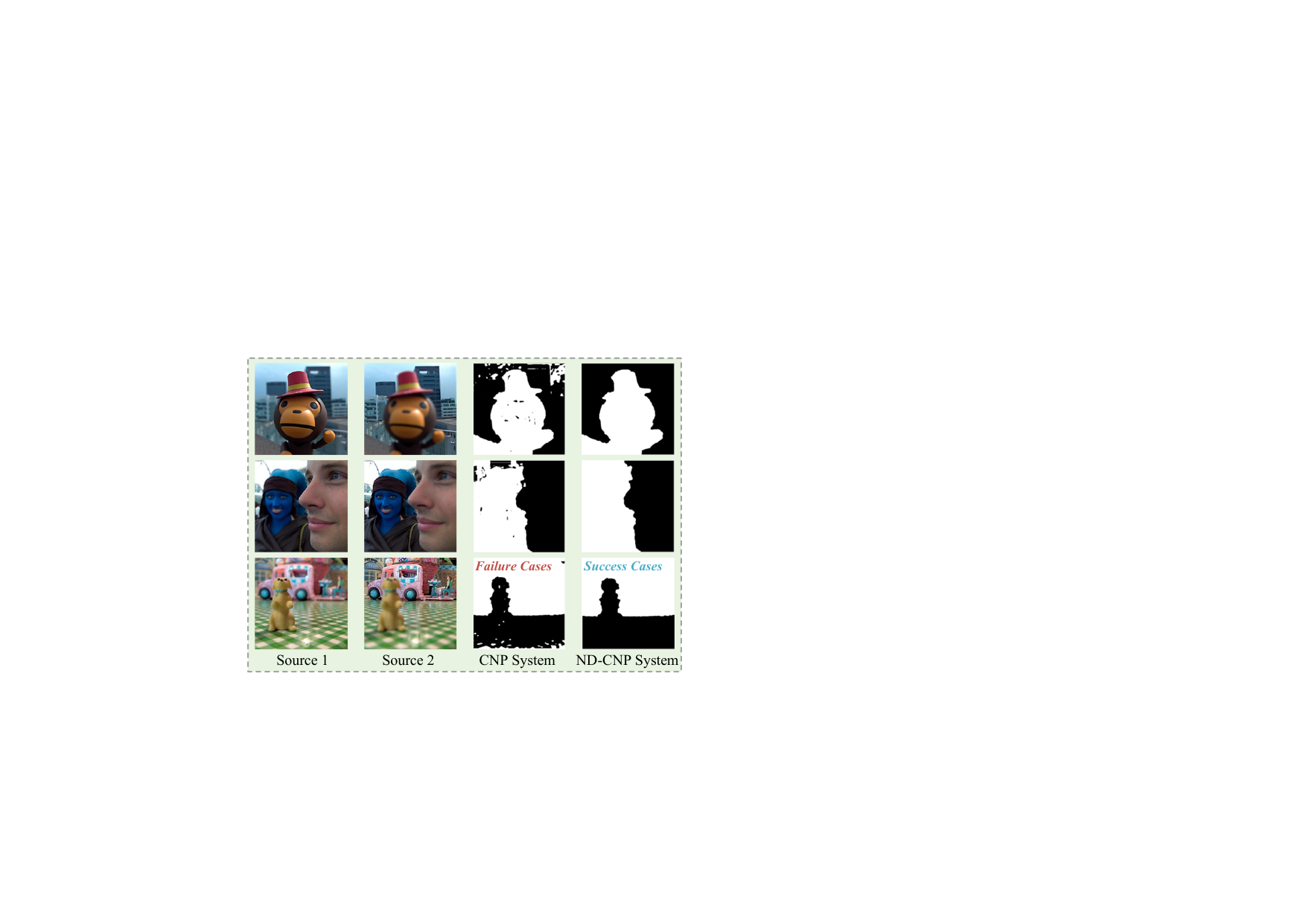}
	\caption{Visualization of decision maps for success cases (ND-CNP system) and failure cases (CNP system) on the Lytro dataset.}
    \label{fig:case_study}
    \vspace{-1em}
\end{figure}

\section{Conclusion}
In this paper, we propose a novel MFIF method called ND-CNPFuse based on neurodynamics-driven CNP systems. The main contributions of ND-CNPFuse have two aspects. The first is that we provide a detailed theoretical neurodynamic analysis of the working mechanism of CNP neurons. This analysis reveals the constraints between network parameters and input signals, which helps prevent model failure caused by continuous firing.  
The second one is that we use spike counts of neurons to distinguish between focused and unfocused regions to generate decision maps without any post-processing, and the process is interpretable. 
Extensive experiments with nine SOTA fusion methods on four datasets show that our method is more competitive.

\section*{Acknowledgement}
This work was supported by National Key Research and Development Program of China (No. 2022YFC3303600), the National Natural Science Foundation of China (No. 62137002, 62293553, 62293554, 62437002, 62477036, and 62477037), the ``LENOVO-XJTU'' Intelligent Industry Joint Laboratory Project, the Fundamental Research Funds for the Central Universities (No. xzy022025037), the Shaanxi Provincial Social Science Foundation Project (No. 2024P041), the Natural Science Basic Research Program of Shaanxi (No. 2023-JC-YB-593), the Youth Innovation Team of Shaanxi Universities ``Multi-modal Data Mining and Fusion'', Shaanxi Undergraduate and Higher Education Teaching Reform Research Program (No. 23BY195), Xi'an Jiaotong University City College Research Project (No. 2024Y01), and China Knowledge Centre for Engineering Sciences and Technology.
{
    \small
    \bibliographystyle{ieeenat_fullname}
    \bibliography{main}
}


\end{document}